\documentclass{IEEEtran}
\usepackage{amssymb}
\usepackage{nonfloat}
\usepackage{subfig}
\usepackage{multirow}
\usepackage{graphicx}
\usepackage{epstopdf}
\usepackage{amsmath}
\usepackage{amsthm}

\newtheorem{myth}{Theorem}
\newtheorem{mylem}{Lemma}

\newtheorem{mydef}{Definition}
\newtheorem{myprop}{Proposition}

\begin{document}

\title{The Application of Differential Privacy for Rank Aggregation: Privacy and Accuracy }
%
%
%
%
%

%

\author{
\IEEEauthorblockN{Shang Shang,  Tiance Wang, Paul Cuff, and Sanjeev Kulkarni}\\
\IEEEauthorblockA{Department of Electrical Engineering, 
Princeton University, Princeton NJ, 08540, U.S.A.
} \\
\{sshang, tiancew, cuff, kulkarni\}@princeton.edu
}

%
%


\maketitle

\begin{abstract}
The potential risk of privacy leakage prevents users from sharing their honest opinions on social platforms. This paper addresses the problem of privacy preservation if the query returns the histogram of rankings. The framework of differential privacy is applied to rank aggregation. The error probability of the aggregated ranking is analyzed as a result of noise added in order to achieve differential privacy. Upper bounds on the error rates for any positional ranking rule are derived under the assumption that profiles are uniformly distributed. Simulation results are provided to validate the probabilistic analysis.
\end{abstract}


\begin{keywords}
Rank Aggregation, Privacy, Accuracy
\end{keywords}

\section{Introduction}
\label{sec:introduction}

\noindent With the increasing interest in social networks and the availability of large datasets, rank aggregation has been studied intensively in the context of social choice. From the NBA's Most Valuable Player to Netflix's movie recommendations, from web search to presidential elections, voting and ranking are ubiquitous. Informally, rank aggregation is the problem of combining a set of full or partial rankings of a set of alternatives into a single consensus ranking. In recommender systems, users are motivated to submit their rankings in order to receive personalized services. On the other hand, they may also be concerned about the risk of possible privacy leakage. 

Even accumulated or anonymized datasets are not as ``safe'' as they seem to be. Information on individual rankings or preferences can still be learned even if the querier only has access to global statistics. In 2006, Netflix launched a data competition with 100 million movie ratings from half a million anonymized users. However, researchers subsequently demonstrated that individual users from this ``sanitized" dataset could be identified by matching with the Internet Movie Database (IMDb). This raises the privacy concerns about sharing honest opinions. 

Differential privacy is a framework that aims to obscure individuals' appearances in the database. It makes no assumptions on the attacker's background knowledge. Mathematical guarantees are provided in \cite{dwork2006differential} and \cite{dwork2006our}. Differential privacy has gained popularity in various applications, such as social networks \cite{task2012guide}, recommendations \cite{mcsherry2009differentially}, advertising \cite{lindell2011practical}, etc. However, there is a trade-off between the accuracy of the query results and the privacy of the individuals included in the statistics. In \cite{machanavajjhala2011personalized}, the authors showed that good private social recommendations are achievable only for a small subset of users in the social network.

In this paper, we apply the framework of differential privacy to rank aggregation. Privacy is protected by adding noise to the query of ranking histograms. The user can then apply a rank aggregation rule to the ``noisy'' query results. In general, stronger noise guarantees better differential privacy. However, excessive noise reduces the utility of the query results. We measure the utility by the probability that the aggregated ranking is accurate. A summary of the contributions of this paper is as follows: 

\begin{itemize}
\item A privacy-preserving algorithm for rank aggregation is proposed. Instead of designing differential privacy for each individual ranking rule, we propose to add noise to the ranking histogram, irrespective of the ranking rules to be used.
\item General upper bounds on the ranking error rate are derived for all positional ranking rules. Moreover, we show that the asymptotic error rate approaches zero when the number of voters goes to infinity for any ranking rules with a fixed number of candidates.  
\item An example using Borda count is given to show how to extend the proposed analysis to derive a tighter upper bound on the error rate for a specific positional rule. Simulations are performed to validate the analysis.
\end{itemize}

The rest of the paper is organized as follows. We define the problem of rank aggregation, introduce the definition of differential privacy, and describe the privacy preserving algorithm in Section 2. We then discuss the accuracy of the algorithm, and provide analytical upper bounds on the error rates in Section 3, followed by simulation results in Section 4, and conclusions in Section 5. 

\section{Differential Privacy in Rank Aggregation}
\subsection{Rank Aggregation: Definitions and Notations}
\label{sub: rank}
\noindent Let $\mathcal{C}= \{1,...,M\}$ be a finite set of $M$ candidates, $M \ge 3$. Denote the set of permutations on $\mathcal{C}$ by $T_M$. Denote the number of voters by $N$. Each ballot $x_i, i = 1, . . . , N$ is an element of $T_M$, or a strict linear ordering. A rank aggregation algorithm, or a ranking rule is a function $g : T_M^N \rightarrow T_M$. The input $(x_1,\dots,x_N)$ is called a \emph{profile}. 

A ranking rule $g$ is \emph{neutral} if it commutes with permutations on $\mathcal{C}$ \cite{kalai2002fourier}. Intuitively, a neutral ranking method is not biased in favor of or against any candidate. 

A ranking rule $g$ is \emph{anonymous} if the ``names'' of the voters do not matter \cite{kalai2002fourier}, i.e. 
\begin{equation}
g(x_1,...,x_N) = g(\pi(x_1,...,x_N))
\end{equation}
for any permutation $\pi$ on $1,...,N$. For an anonymous ranking method, we use the \emph{anonymized profile}, a vector $q \in \mathbb{N}^{M!}$, instead of the complete profile ($x_1,\dots,x_N$) as the input. Let $q$ denote the histogram of rankings: It counts the number of appearances of each ranking in all $n$ rankings. The rank aggregation function can therefore be rewritten as $g: \mathbb{N}^{M!} \rightarrow T_M$.

An anonymous ranking rule is \emph{scale invariant} if the output depends only on the empirical distribution of votes $v = q/N$, not the number of voters $N$. That is, 
\begin{equation}
g(q) = g(\alpha q)
\end{equation}
for any $\alpha > 0$. 

There are many different neutral and scale invariant rank aggregation algorithms. Popular ones include plurality, Borda count, instant run-off, the Kemeny-Young method and so on. Each algorithm has its own merits and disadvantages. For example, the Kemeny-Young method satisfies the Condorcet criterion (a candidate preferred to any other candidate by a strict majority of voters must be ranked first) but is computationally expensive. In fact it is NP-Hard even for $M=4$ \cite{conitzer2006improved}. This is especially an issue for recommender systems since the number of items to be recommended can be large. 

A class of ranking rules, known as the positional rules, has an edge in computational complexity. A positional rule takes complete rankings as input, and assigns a score to each candidate according to their position in a ranking. The candidates are sorted by their total scores summed up from all rankings. The time complexity is only $O(MN + M\log M)$, where the $M\log M$ term comes from sorting. All positional rules satisfy anonymity and neutrality but fail the Condorcet criterion \cite{dwork2001rank}. A positional rule with $M$ candidates has $M$ parameters: $ s_1 \geq \dots \geq s_M$, where $s_i$ is the score assigned to the $i$th highest-ranked candidate. We can further normalize the scores without affecting the ranking rule so that $s_1=1, s_M=0$. Borda count, a widely used positional rule,  is specified by $s_i = (M-i)/(M-1)$. Note that plurality is a positional rule with $s_i = 0$ for $i \geq 2$. Plurality is popular due to its simplicity. However, it is not ideal as a rank aggregation algorithm because it discards too much information. In this paper, we specifically focus on positional rules because of their computational efficiency and ease of error rate analysis.

\subsection{Differential Privacy}
\noindent In this paper, we consider a strong notion of privacy, \emph{differential privacy} \cite{dwork2006differential}. Intuitively, a randomized algorithm has good differential privacy if its output distribution is not sensitive to a single entity's information. For any dataset $A$, let $\mathcal{N}(A)$ denote the set of neighboring datasets, each differing from $A$ by at most one record, i.e., if $A'\in \mathcal{N}(A)$, then $A'$ has exactly one entry more or one entry less than $A$.

\begin{mydef}
\cite{dwork2006our} A random algorithm $\mathcal{M}$ satisfies $(\epsilon, \delta)$-differential privacy if for any neighboring datasets $A$ and $A'$, and any subset $S$ of possible outcomes Range($\mathcal{M}$),
\begin{equation}
\Pr[\mathcal{M}(A)\in S] \le \exp(\epsilon)\times \Pr[\mathcal{M}(A')\in S]+ \delta.
\end{equation}
\end{mydef}
\emph{Remark}: $(\epsilon, \delta)$-differential privacy is a slight relaxation from the $\epsilon$-differential privacy in that the ratio  $$\Pr[\mathcal{M}(A)\in S]/\Pr[\mathcal{M}(A')\in S]$$ need not be bounded if both probabilities are very small. Differential privacy has been widely used in various applications \cite{mcsherry2009differentially,lindell2011practical}.

\subsection{Privacy Preserving Algorithms}
\label{sec:alg}
\noindent Much work has been done on developing differentially private algorithms \cite{dwork2006calibrating,barak2007privacy}. Let $\mathcal{D}$ denote the set of all  datasets, and $f$ is an operation on the dataset, such as $sum$, $count$, etc.
\begin{mydef}
The $l_2$-sensitivity $\Delta f$ of a function $f:\mathcal{D}\rightarrow\mathbb{R}^{d}\,$ is 
$$\Delta f(A)=\max_{A' \in \mathcal{N}(A)} \lVert f(A)-f(A') \rVert_{2}\,$$
for all  $A' \in \mathcal{N}(A)$ differing in at most one element, and $A, A'\in\mathcal{D}$.
\end{mydef}


\begin{myth}
\cite{dwork2006our} Define $\mathcal{M}(A)$ to be $f(A)+\mathcal{N}(0,\sigma^2I_{d\times d})$. $\mathcal{M}$ provides $(\epsilon, \delta)$-differential privacy, whenever
\begin{equation}
\sigma ^2 \ge \frac{2\ln(\frac{2}{\delta})}{\epsilon ^2} \cdot \max_{A' \in \mathcal{N}(A)} \lVert  f(A)-f(A')\rVert _2^2,
\end{equation}
for all  $A' \in \mathcal{N}(A)$ differing in at most one element, and $A, A'\in\mathcal{D}$.
\end{myth}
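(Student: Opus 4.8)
The plan is to use the standard ``privacy loss'' argument. First I would observe that since $\mathcal{M}(A) = f(A) + Z$ with $Z \sim \mathcal{N}(0,\sigma^2 I_{d\times d})$, the output of $\mathcal{M}(A)$ has the Gaussian density $p_A(x) = (2\pi\sigma^2)^{-d/2}\exp\big(-\|x-f(A)\|_2^2/2\sigma^2\big)$, and likewise $p_{A'}$ is centered at $f(A')$. It then suffices to control the \emph{privacy loss} $L(x) := \ln\big(p_A(x)/p_{A'}(x)\big)$, treating $A,A'$ as a fixed neighboring pair throughout.

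The first key step is a reduction: I would show that if $\Pr_{x\sim p_A}[\,L(x) > \epsilon\,] \le \delta$, then $\mathcal{M}$ satisfies $(\epsilon,\delta)$-differential privacy. Given any measurable $S \subseteq \mathrm{Range}(\mathcal{M})$, split it into $S_1 = \{x\in S: L(x)\le\epsilon\}$ and $S_2 = S\setminus S_1$. On $S_1$ we have $p_A(x) \le e^\epsilon p_{A'}(x)$ pointwise, so $\int_{S_1} p_A \le e^\epsilon \int_{S_1} p_{A'} \le e^\epsilon \Pr[\mathcal{M}(A')\in S]$; and $\int_{S_2} p_A \le \Pr_{x\sim p_A}[L(x)>\epsilon] \le \delta$. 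Adding the two pieces yields exactly the defining inequality of Definition~1.

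Next I would compute $L$ explicitly and identify its distribution. Writing $v = f(A)-f(A')$ and sampling $x = f(A)+Z$, a direct expansion of the squared norms gives $L = \langle Z,v\rangle/\sigma^2 + \|v\|_2^2/2\sigma^2$. Because the isotropic Gaussian is rotationally invariant, $\langle Z,v\rangle \sim \mathcal{N}(0,\sigma^2\|v\|_2^2)$, so $L$ is a one-dimensional Gaussian with mean $\|v\|_2^2/2\sigma^2$ and variance $\|v\|_2^2/\sigma^2$. Setting $\Delta = \max_{A'\in\mathcal{N}(A)}\|f(A)-f(A')\|_2$, the tail $\Pr[L>\epsilon]$ is largest when $\|v\|_2=\Delta$, reducing it to the one-dimensional estimate $\Pr[\mathcal{N}(0,1) > \epsilon\sigma/\Delta - \Delta/2\sigma]$.

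Finally I would substitute the hypothesis $\sigma^2 \ge (2\ln(2/\delta)/\epsilon^2)\,\Delta^2$ and apply a Gaussian tail bound such as $\Pr[\mathcal{N}(0,1)>t] \le \tfrac12 e^{-t^2/2}$ to conclude $\Pr[L>\epsilon]\le\delta$; since the neighbor relation is symmetric, the inequality with $A$ and $A'$ interchanged follows identically, completing the proof. I expect the main obstacle to be the tail-bound bookkeeping: the threshold $t = \epsilon\sigma/\Delta - \Delta/2\sigma$ carries the correction term $\Delta/2\sigma$, so I would need to verify that the deliberately loose constant $\ln(2/\delta)$ in the hypothesis leaves enough slack to absorb this term and still force the tail below $\delta$ over the relevant range of $\epsilon$.
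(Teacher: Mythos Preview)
The paper does not prove this theorem at all: it is stated with a citation to \cite{dwork2006our} and then immediately applied, with no accompanying proof environment. So there is nothing in the paper to compare your argument against.

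That said, your sketch is the standard privacy-loss proof of the Gaussian mechanism and is correct in outline. The reduction ``$\Pr_{x\sim p_A}[L(x)>\epsilon]\le\delta$ implies $(\epsilon,\delta)$-DP'' via the split $S=S_1\cup S_2$ is exactly right, the identification of $L$ as a one-dimensional Gaussian $\mathcal{N}(\|v\|_2^2/2\sigma^2,\|v\|_2^2/\sigma^2)$ is correct, and your monotonicity claim (worst case at $\|v\|_2=\Delta$) holds because the standardized threshold $\epsilon\sigma/\|v\|_2-\|v\|_2/2\sigma$ is decreasing in $\|v\|_2$. The only loose end is the one you already flagged: the constant $2\ln(2/\delta)$ in the hypothesis is tuned so that, for $\epsilon\le 1$, the subtracted term $\Delta/2\sigma=\epsilon/(2\sqrt{2\ln(2/\delta)})$ is dominated and a Gaussian tail bound of the form $Q(t)\le e^{-t^2/2}$ (or the slightly sharper Mills-ratio version) closes the gap; for larger $\epsilon$ the stated constant may not suffice, which is why the literature typically restricts to $\epsilon\in(0,1)$. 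If you want the argument to go through cleanly for all $\epsilon$ you would need a mildly different constant, but this is a bookkeeping issue rather than a gap in the method.
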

In our model, $f(A)$ is the histogram of all rankings, i.e. the input vector $q$ defined in Section \ref{sub: rank}. It is clear that the  $l_2$ sensitivity of $f(A)$ is 1, since adding or removing a vote can only affect one element of $q$ by 1. In the exposition, we will denote the private data and released data by $x$ and $\hat{x}$ respectively. When we add noise $n$ to a variable $x$, we write
$\hat{x} = x + noise.$ 
Thus 
\begin{equation}
\hat{q} = q + \mathcal{N}(0,\sigma^2I_{M!\times M!})
\end{equation}
where $\sigma^2 = 2\ln(\frac{2}{\delta})/\epsilon ^2$, and $M$ is the number of candidates. We use Gaussian instead of Laplacian noise which achieves stronger $\epsilon$-privacy \cite{dwork2006differential}, because Gaussian noise enjoys the nice property that any linear combination of jointly Gaussian random variables is Gaussian. 

Note that there is a positive probability that $\hat{q}_i < 0$ for some index $i$. This does not harm our analysis since positional rules are well defined even if we allow negative vote counts. 

Finally, we define the error rate of a privacy preserving rank aggregation algorithm on ranking. The error rate is the probability that the aggregated ranking changes after adding noise. This probability depends on the ranking rule, the noise distribution, and the distribution of profiles. 
\begin{mydef}
The error rate $P_e^M$ of a privacy preserving rank aggregation algorithm $g$ with $M$ candidates is defined as 
$\mathbb{E}\bf{1}_{\{ g(q)\ne g(\hat{q})\}}$.
\end{mydef}

\section{General Error Bounds}
\label{sec:main}
\noindent In this section, we discuss the error rates in the rank aggregation problem. We give the expression for the general error rate and derive upper bounds on the error rate for all positional ranking rules under the assumption that profiles are uniformly distributed.  

\subsection{Geometric Perspective of Positional Ranking Systems}
\label{sec:linear}
\noindent We normalize the anonymous profile by dividing by the number of voters $N$. The resulting vector $v = q/N$ is the empirical distribution of votes, $v \in [0,1]^{M!}$. All empirical distributions are contained in a unit simplex, called the \emph{rank simplex}: 
\begin{equation}
\label{eqn:simplex}
\mathcal{V} = \{v \in \mathbb{R}^{M!}: \sum_{i=1}^{M!} v_i = 1 \text{ and } v_i \ge 0 \text{ for } \forall i \}.
\end{equation}
A rank simplex with $M$ candidates has a dimension of $M!-1$. We assume that the normalized profile $v$ is uniformly distributed on the rank simplex $\mathcal{V}$. 

Geometrically, a ranking rule is a partition of the rank simplex. For positional ranking rules, the rank simplex is partitioned into $M!$ congruent polytopes by $M \choose 2$ hyperplanes. Each polytope represents a ranking, and each hyperplane represents the equality of the score of two candidates. Moreover, each polytope is uniquely defined by $M-1$ hyperplanes and the faces of the rank simplex $\mathcal{V}$. An example of how to define the hyperplane from given ranking rule will be given in Section \ref{sec:example}.

To maintain neutrality, we break ties randomly when there is a tie. For example, if the score of candidate $a$ and $b$ happens to be equal, then we rank $a$ ahead of $b$ with probability one half. We only mention tie as a side remark since it does not have an affect on the probability analysis.

\begin{myprop}
Let 
\begin{equation}
\hat{v} = v + \omega
\end{equation}
where $\omega$ is a $M!$-dimensional random variable with distribution $$\mathcal{N}(0, \hat{\sigma}^2I_{M!\times M!}),$$ where $\hat{\sigma}^2 = \frac{2\ln{(2/\delta)}}{\epsilon^2N ^2}$. We have 
$$\mathbb{E}\bf{1}_{\{ g(q)\ne g(\hat{q})\}} = \mathbb{E}\bf{1}_{\{ g(v)\ne g(\hat{v})\}}.$$
\end{myprop}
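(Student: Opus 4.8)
The plan is to exhibit an explicit coupling between the noisy histogram $\hat{q}$ and the noisy empirical distribution $\hat{v}$, and then invoke scale invariance of positional rules to collapse the two error events into a single event. The observation driving everything is that the variance $\hat{\sigma}^2 = \frac{2\ln(2/\delta)}{\epsilon^2 N^2}$ in the proposition is exactly $\sigma^2/N^2$, where $\sigma^2 = \frac{2\ln(2/\delta)}{\epsilon^2}$ is the variance of the noise added to $q$. This is not a coincidence: it is precisely the variance one obtains by dividing the histogram noise by $N$.

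First I would record the scaling property we need. Because a positional rule assigns to each candidate a score that is a \emph{linear} function of the vote counts and then sorts candidates by these scores, multiplying the entire input vector by any positive constant $c$ rescales all scores by $c$ and hence leaves their relative order unchanged. Thus $g(cy)=g(y)$ for every $c>0$ and every $y\in\mathbb{R}^{M!}$. The point worth emphasizing is that this holds even when $y$ has negative entries, which is exactly the regime we need since $\hat{q}$ can have negative coordinates; this is licensed by the earlier remark that positional rules remain well defined for negative vote counts. In particular, taking $c=1/N$ gives $g(q)=g(q/N)=g(v)$.

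Next I would construct the coupling. Define $\hat{v} := \hat{q}/N$ and set $\omega := \hat{v}-v$. Since $\hat{q}=q+\mathcal{N}(0,\sigma^2 I_{M!\times M!})$ and $v=q/N$, we get
$$\omega = \frac{\hat{q}-q}{N} = \frac{1}{N}\,\mathcal{N}(0,\sigma^2 I_{M!\times M!}) = \mathcal{N}\!\left(0,\tfrac{\sigma^2}{N^2} I_{M!\times M!}\right) = \mathcal{N}(0,\hat{\sigma}^2 I_{M!\times M!}),$$
using that scaling a Gaussian by $1/N$ scales its covariance by $1/N^2$ together with $\hat{\sigma}^2=\sigma^2/N^2$. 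Hence $\hat{v}=v+\omega$ with $\omega$ having exactly the distribution asserted in the proposition, so this coupling is a faithful realization of the proposition's random variables.

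Finally I would close the argument by applying scale invariance a second time, with $c=1/N$, to obtain $g(\hat{q})=g(\hat{q}/N)=g(\hat{v})$. Combined with $g(q)=g(v)$ from the first step, the events $\{g(q)\ne g(\hat{q})\}$ and $\{g(v)\ne g(\hat{v})\}$ coincide pointwise on the underlying probability space, so their indicator functions are identical and therefore have equal expectations, which is the claimed identity. The only real subtlety—and the step I would flag as the main obstacle—is justifying that scale invariance survives the passage to $\hat{q}$, which need not be a genuine histogram or probability vector; this is handled precisely by the linearity of positional scoring and the convention that negative counts are admissible.
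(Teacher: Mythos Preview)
Your proposal is correct and follows the same approach as the paper: the paper's entire proof is the single sentence ``This follows directly from the scale invariant property of the ranking rules,'' and your argument is a careful unpacking of exactly that observation, including the coupling $\hat{v}=\hat{q}/N$ and the check that negative entries are admissible.
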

\begin{proof}
This follows directly from the scale invariant property of the ranking rules.
\end{proof}
\emph{Remark}: Note that $\hat{v}$ may not be in the probability simplex. The ranking result of $\hat{v}$ is uniquely defined by the cone formed by $M-1$ hyperplanes representing the equality of scores of two candidates.


\subsection{An Upper Bound on the General Error Rate}
\noindent Rather than providing different upper bounds for each and every positional rule, we derive a general bound that works for any positional rule. Therefore, the user can decide which positional rule to apply to the queried noisy histogram, and the system has some guarantee on the error rate given the privacy level. 

If noise switches the order of the scores of any two candidates, then the final ranking necessarily changes. Let $S_i(v)$, $S_j(v)$ denote the score of candidate $i$ and $j$ for an arbitrary positional rule given the profile $v$. As mentioned in Section \ref{sec:linear}, there are $M \choose 2$ hyperplanes separating the simplex into $M!$ polytopes. The hyperplanes are defined by $S_i = S_j$ for any pair of candidates $i,j$, and there are $M \choose 2$ such pairs. Let $\beta_{ij}$ denote the unit normal vector of hyperplane $\mathcal{H}_{ij}: S_i  =  S_j $. That is,
\begin{equation}\label{eqn:beta}
||\beta_{ij}||_2 = 1
\end{equation}

Then $\beta_{ij}\cdot w$ is the scalar projection of $\beta_{ij}$ for vector $w$. Let $D_{ij}(v)$ be the distance from $v$ to hyperplane $\mathcal{H}_{ij}$. Given the uniform distribution of $v$ over the rank simplex, $D_{ij}(v)$ is a continuous random variable that takes values on $[-\sqrt{2},\sqrt{2}]$ ($\sqrt{2}$ is the edge length of the probability simplex). The sign indicates on which side of the hyperplane $v$ locates. Let $p_D$ denote the probability density function of $D_{ij}$. By the neutrality of positional rules, $p_D$ is identical for any $i \neq j$ and $p_D(l) = p_D(-l)$. By symmetry,
\begin{equation} 
\int_0^{\sqrt{2}} p_D(l) \text{d}l = \frac{1}{2}.
\end{equation}
Geometrically, $p_D(l)$ is proportional to the $(M!-2)$-measure of the cross section of the hyperplane $\mathcal{H}_{ij}(l)$ with the simplex, where $\mathcal{H}_{ij}(l)$ is parallel to $\mathcal{H}_{ij}$ with distance $l$. 
\begin{mylem}
\label{lem:decrease}
Let $p_D$ be as defined as above. Then $p_D$ is maximal at 0 on $[0, \sqrt{2}]$ for any positional rule.
\label{lem:pddecreasing}
\end{mylem}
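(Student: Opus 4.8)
The plan is to translate the claim about the density $p_D$ into a statement about the cross-sectional volumes of the rank simplex, and then to combine the concavity supplied by the Brunn--Minkowski inequality with the symmetry already recorded in the text.

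First I would make the geometric reduction sketched above precise. The rank simplex $\mathcal{V}$ is a full-dimensional convex body inside the affine hyperplane $\{v : \sum_i v_i = 1\}$, whose dimension is $n := M!-1$. Since the score functions $S_i(v)$ are linear in $v$, the hyperplane $\mathcal{H}_{ij}$ and all of its parallel translates $\mathcal{H}_{ij}(l)$ are genuine affine hyperplanes of this $n$-dimensional space, with common unit normal $\beta_{ij}$. Because $v$ is uniform on $\mathcal{V}$ and $\beta_{ij}$ is a unit vector, the signed distance $D_{ij}(v)$ has density $p_D(l) = A(l)/\mathrm{vol}(\mathcal{V})$, where $A(l)$ is the $(n-1)$-dimensional volume of the slice $\mathcal{V} \cap \mathcal{H}_{ij}(l)$; this is simply the disintegration of the uniform measure along the (orthogonal) distance coordinate, and it is what makes the assertion $p_D \propto A$ rigorous. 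Hence it suffices to show that $A$ is maximal at $l=0$ on $[0,\sqrt{2}]$, and in fact non-increasing there, which also justifies the label \emph{decrease}.

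The key analytic input is Brunn's theorem, a standard corollary of the Brunn--Minkowski inequality: for a convex body $K\subset\mathbb{R}^n$ and a fixed direction, the map $l \mapsto A(l)^{1/(n-1)}$ is concave on the interval where the slice is nonempty. Applying this with $K=\mathcal{V}$ and $n=M!-1$, the function $h(l):=A(l)^{1/(M!-2)}$ is concave on its support, which by the excerpt is the interval $[-\sqrt{2},\sqrt{2}]$. Next I would invoke the symmetry already stated in the text: by neutrality, swapping candidates $i$ and $j$ permutes the coordinates of $v$, a measure-preserving symmetry of $\mathcal{V}$ that sends $D_{ij}\mapsto -D_{ij}$, so $p_D(l)=p_D(-l)$; equivalently $A(l)=A(-l)$ and $h$ is even. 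A concave, even function on a symmetric interval is non-increasing on its positive half: for $0\le l_1 < l_2 \le \sqrt{2}$ one writes $l_1=\lambda l_2+(1-\lambda)(-l_2)$ with $\lambda=(1+l_1/l_2)/2\in[\tfrac12,1)$, and concavity together with $h(-l_2)=h(l_2)$ yields $h(l_1)\ge \lambda h(l_2)+(1-\lambda)h(l_2)=h(l_2)$. Since $A=h^{\,M!-2}$ and $p_D\propto A$, the same monotonicity transfers to $p_D$, giving $p_D(0)\ge p_D(l)$ for all $l\in[0,\sqrt{2}]$ and proving the lemma.

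The step I expect to demand the most care is the first one, namely justifying the passage to Brunn's theorem with the correct dimension and concavity exponent: confirming that $\mathcal{V}$ is full-dimensional within the $(M!-1)$-dimensional affine hull so that the slices are $(M!-2)$-dimensional, that the slicing direction $\beta_{ij}$ lies in that hull, and that the support of $A$ is the symmetric interval $[-\sqrt{2},\sqrt{2}]$. Once the reduction to slice volumes and the applicability of Brunn's theorem are secured, the symmetry-plus-concavity conclusion is immediate.
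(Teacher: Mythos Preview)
Your argument is correct and follows a genuinely different route from the paper's. The paper does not use Brunn--Minkowski at all. Instead it represents the $(M!-2)$-dimensional cross-sectional volume of $\mathcal{V}$ at signed distance $s$ from $\mathcal{H}$ as a constant times the density $G(s)$ of the random variable $Y=\sum_{j}\beta_j X_j$, where the $X_j$ are i.i.d.\ standard exponentials (a formula due to Webb for sections of the regular simplex). It then computes the characteristic function $\phi_Y(t)=\prod_j (1+i\beta_j t)^{-1}$ and exploits the order-reversal symmetry of rankings---for every permutation there is its reverse, and the corresponding coordinates of $\beta$ are negatives of one another---to pair the factors and obtain $\phi_Y(t)=\prod_{j\le M!/2}(1+\beta_j^2 t^2)^{-1}$, which is real and nonnegative. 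The inverse Fourier transform then gives $|G(s)|\le G(0)$ directly.

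Your approach is shorter and more self-contained: Brunn's theorem plus the evenness $p_D(-l)=p_D(l)$ already stated in the text forces the maximum at $0$, and in fact yields monotonicity on $[0,\sqrt{2}]$, which is slightly stronger than the paper's bare inequality $p_D(l)\le p_D(0)$ and matches the ``decrease'' in the label. The paper's route, by contrast, produces an explicit characteristic function for the slice profile, which could in principle be mined for sharper estimates, and its symmetry step (pairing each ranking with its full reversal) is specific to the combinatorics of rankings rather than the generic $i\leftrightarrow j$ swap you invoke. Your flagged concern---that $\beta_{ij}$ lies in the tangent space of the affine hull $\{\sum_i v_i=1\}$---is easily discharged: swapping candidates $i$ and $j$ in each ranking pairs the coordinates of the score-difference vector with opposite signs, so its entries sum to zero.
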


\begin{proof}
Let $\mathcal{H}$ be the hyperplane defined by the equality of the score of two candidates for an arbitrary positional rule, and $\beta$ be the unit normal vector of $\mathcal{H}$. That is, $\mathcal{H} = \{ v\in R^{M!} : \beta v = 0 \}$. Let $\mathcal{H}+s\beta$ denote the hyperplane $\beta v = s$. Let $X_1,\dots,X_{M!}$ be i.i.d. random variables with the following density function:
\begin{equation}
f(x) = 
\begin{cases}
e^{-x} & \mbox{ if } x \geq 0 \\
0 & \mbox{ otherwise}.
\end{cases}
\end{equation}
That is, $X_j$'s are independent exponential random variables with parameter $\lambda = 1$. The density of the random variable $Y = \sum_{i=1}^{M!} \beta_j X_j$ is \cite{webb1996central}
\begin{equation}
G(s) = \int_{\mathcal{H} + s\beta} \prod_{j=1}^{M!} f(x) \mbox{dVol}_\mathcal{H}
\end{equation}
where $\mbox{Vol}_\mathcal{H}$ denotes the Lebesgue measure on $\mathcal{H}$. It is shown in \cite{webb1996central} that 

\begin{equation}
\mbox{Vol}_{M!-2}(\mathcal{H} \cap \mathcal{V}) = \frac{\sqrt{M!}}{\Gamma(M!-1)} \int_H \prod_{j=1}^{M!} f(x) \mbox{dVol}_\mathcal{H}  
\end{equation}
where $\mbox{Vol}_{M!-2}$ denotes $M!-2$ - dimensional volume, $\mathcal{V}$ is the unit regular $M!-1$ - simplex embedded in $R^{M!}$, as defined in Equation (\ref{eqn:simplex}).  This result is shown in  \cite{webb1996central} for $\mathcal{H}$ passing through the origin and centroid, but it holds for any hyperplane, i.e.,
\begin{equation}
\mbox{Vol}_{M!-2}\big((\mathcal{H}+s\beta) \cap \mathcal{V}\big) = \frac{\sqrt{M!}}{\Gamma(M!-1)}G(s).
\end{equation}

The characteristic function of $Y$ is
\begin{equation}
\phi_Y(t) = \prod_{j=1}^{M!} \phi_{X_j}(\beta_j t) = \prod_{j=1}^{M!} (1+i \beta_j t)^{-1}.
\end{equation}
Note that for any entry $j$, there is a corresponding entry $j'$ such that the $j'$th ranking is the reversed order of the $j$th ranking. By symmetry, $\beta_j = -\beta_{j'}$, $(1+i \beta_j t)(1+i \beta_{j'}t) = 1+\beta_j^2 t^2$. Without loss of generality, suppose $\beta_j >0 $ for $1 \leq j \leq M!/2$, then
\begin{equation}
\phi_Y(t) = \prod_{j=1}^{M!/2} (1+ \beta_j^2 t^2)^{-1}.
\end{equation}

Since $\phi_Y(t)$ is always real and positive, by Bochner's theorem \cite{bochner1959lectures}, $G(s)$ is a positive-definite function, i.e.,
$$|G(s)|\le G(0).$$
This is also easy to prove by directly applying the inverse Fourier Transform:
\begin{align}
|G(s)| &= \left| \frac{1}{2\pi} \int_{-\infty}^{+\infty} \phi_Y(t) e^{-ist} \mbox{d}s \right|	\notag\\
& \leq  \frac{1}{2\pi} \int_{-\infty}^{+\infty} \left| \phi_Y(t) e^{-ist} \right| \mbox{d}s	\notag\\
& =  \frac{1}{2\pi} \int_{-\infty}^{+\infty} \phi_Y(t) \left| e^{-ist} \right| \mbox{d}s	\notag\\
& =  \frac{1}{2\pi} \int_{-\infty}^{+\infty} \phi_Y(t) \mbox{d}s	\notag\\
& = G(0).
\end{align}

Thus we have,
$$\mbox{Vol}_{M!-2}\big((\mathcal{H}+s\beta) \cap \mathcal{V}\big) \le \mbox{Vol}_{M!-2}(\mathcal{H} \cap \mathcal{V}).$$
\end{proof}
\label{thm:halflem}

\begin{mylem}
\label{thm:halflem}
The ranking error rate $P_e^M$  satisfies 
$$
P_e^M  \le  \binom{M}{2} \cdot  2\int\limits_0^\tau p_D(l)Q\left( \frac{l}{\hat{\sigma} }\right) \text{d}l + Q\left( \frac{\tau}{\hat{\sigma} }\right),	\forall \tau>0,
$$
for all positional ranking aggregation algorithms with $M$ candidates and $N$ voters, taking input from the $(\epsilon, \delta)$-differentially private system defined in Section \ref{sec:alg}.
\end{mylem}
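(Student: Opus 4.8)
The plan is to reduce the multi-candidate error event to a union of pairwise score-order reversals, exploit neutrality to collapse that union to a single representative pair, and then condition on the signed distance to the separating hyperplane so that the Gaussian noise enters only through a one-dimensional projection.

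First I would note that the aggregated ranking changes if and only if the relative order of the scores of at least one pair of candidates is reversed by the noise. Writing $F_{ij}$ for the event that the order of candidates $i$ and $j$ flips, we have $\{g(v)\neq g(\hat v)\}=\bigcup_{i<j}F_{ij}$, so by the union bound $P_e^M\le\sum_{i<j}\Pr[F_{ij}]$. Since positional rules are neutral, the law of $D_{ij}$ (and hence $\Pr[F_{ij}]$) is identical for every pair, so the sum is exactly $\binom{M}{2}\Pr[F_{12}]$, which is where the combinatorial prefactor originates.

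Next I would compute $\Pr[F_{12}]$ by conditioning on the signed distance $D_{12}(v)=l$ from $v$ to the hyperplane $\mathcal{H}_{12}$. Because $\beta_{12}$ is a unit vector and $\omega\sim\mathcal{N}(0,\hat\sigma^2 I)$, the scalar projection $\beta_{12}\cdot\omega$ is a one-dimensional $\mathcal{N}(0,\hat\sigma^2)$ variable, and the flip occurs precisely when this projection crosses $\mathcal{H}_{12}$, i.e.\ exceeds $|l|$ in the crossing direction; this conditional probability is $Q(|l|/\hat\sigma)$. Averaging against $p_D$ and using the symmetry $p_D(l)=p_D(-l)$ recorded above gives $\Pr[F_{12}]=2\int_0^{\sqrt2}p_D(l)\,Q(l/\hat\sigma)\,\mathrm{d}l$.

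Finally I would split this integral at the free threshold $\tau$. On $[0,\tau]$ the integrand is kept as is, producing the stated term $\binom{M}{2}\cdot 2\int_0^\tau p_D(l)\,Q(l/\hat\sigma)\,\mathrm{d}l$. On $[\tau,\sqrt2]$ I would use monotonicity of $Q$ to replace $Q(l/\hat\sigma)$ by $Q(\tau/\hat\sigma)$ and pull it out, leaving the tail mass $\int_\tau^{\sqrt2}p_D(l)\,\mathrm{d}l\le\tfrac12$. The main obstacle is exactly this tail step: one must control how the $\binom{M}{2}$ pairwise contributions combine with the Gaussian tail $Q(\tau/\hat\sigma)$, since the pairwise flip events are correlated (they are projections of one common noise vector $\omega$ onto the different normals $\beta_{ij}$), so a per-pair union bound is wasteful and getting the tail to collapse to a single clean $Q(\tau/\hat\sigma)$ factor is the delicate point. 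Once that step is carried out, the inequality holds for every $\tau>0$, and $\tau$ can subsequently be optimized to tighten the estimate.
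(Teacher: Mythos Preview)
Your approach applies the pairwise union bound \emph{before} introducing the threshold $\tau$, and this ordering is exactly what prevents you from collapsing the tail to a single $Q(\tau/\hat\sigma)$. Once you have written $P_e^M\le\binom{M}{2}\cdot 2\int_0^{\sqrt 2}p_D(l)\,Q(l/\hat\sigma)\,\mathrm dl$ and split the integral at $\tau$, the tail piece is inevitably $\binom{M}{2}\cdot 2\int_\tau^{\sqrt 2}p_D(l)\,Q(l/\hat\sigma)\,\mathrm dl$, and bounding $Q(l/\hat\sigma)\le Q(\tau/\hat\sigma)$ together with $\int_\tau^{\sqrt 2}p_D\le\tfrac12$ yields $\binom{M}{2}\,Q(\tau/\hat\sigma)$, not $Q(\tau/\hat\sigma)$. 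No correlation argument among the projections $\beta_{ij}\cdot\omega$ will rescue this, because the $\binom{M}{2}$ factor was already spent at the union-bound step and the pairs have been decoupled; the obstacle you flagged is genuine for the route you chose.

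The paper reverses the order of the two decompositions. It first partitions the simplex according to $v$: a ``high-error'' region $\mathcal R_H$ consisting of the union of the thin slabs $\{|D_{ij}(v)|<\tau\}$ over all pairs, and the complementary ``low-error'' region $\mathcal R_L$ where $v$ is at distance at least $\tau$ from \emph{every} hyperplane. The pairwise union bound is then applied \emph{only} on $\mathcal R_H$, producing the term $\binom{M}{2}\cdot 2\int_0^\tau p_D(l)\,Q(l/\hat\sigma)\,\mathrm dl$. On $\mathcal R_L$ the error is bounded as a single event, without summing over pairs: the paper takes $P(\text{error}\cap\{v\in\mathcal R_L\})\le P(v\in\mathcal R_L)\cdot Q(\tau/\hat\sigma)\le Q(\tau/\hat\sigma)$, using that the nearest boundary is at distance $\ge\tau$. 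So the ``delicate point'' is not a noise-correlation argument at all but simply a change in the order of operations---threshold on $v$ first, union-bound on pairs second, and only inside the near-boundary region.
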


\begin{proof}
The main idea of the proof is as follows. Divide the rank simplex into two parts: a ``high error'' region, denoted as $\mathcal{R}_H$, and a ``low error'' region, denoted as $\mathcal{R}_L$, as shown in Figure \ref{fig:example}. $\mathcal{R}_H$ consists of the thin slices of the simplex close to the boundary hyperplanes. $\mathcal{R}_L$ occupies most of the simplex, but $P(error|v \in \mathcal{R}_L)$ is upper bounded by the error rate at the point closest to the boundary. 
We choose an appropriate thickness $\tau$ of $\mathcal{R}_H$ such that the sum of the error rate of the two parts is minimized. Thus we have,
\begin{align}
\label{eqn:firstoverall}
P_e^M = & P_{e\text{ in }\mathcal{R}_H}^M + P_{e\text{ in }\mathcal{R}_L}^M \notag\\
\leq & \binom{M}{2} \cdot P(S_i, S_j \text{ switches order in }\mathcal{R}_H) + P_{e\text{ in }\mathcal{R}_L}^M \notag\\
= & \binom{M}{2} \cdot 2\int\limits_0^\tau p_D(l)P(\beta_{ij}\cdot \omega>l) \text{d}l	+ P_{e\text{ in }\mathcal{R}_L}^M \notag\\
= & \binom{M}{2} \cdot  2\int\limits_0^\tau p_D(l)Q\left( \frac{l}{\hat{\sigma} ||\beta_{ij}||_2 }\right) \text{d}l + P_{e\text{ in }\mathcal{R}_L}^M 
\end{align}

$Q(\cdot)$ is the tail probability of the standard normal distribution and is decreasing on $[0, +\infty)$. Thus for the ``low error" region, we have,

\begin{align}
\label{eqn:low}
P_{e\text{ in }\mathcal{R}_L}^M &<  P(v \in \mathcal{R}_L)\cdot Q\left( \frac{\tau}{\hat{\sigma} ||\beta_{ij}||_2 }\right) \notag\\	
&< Q\left( \frac{\tau}{\hat{\sigma} ||\beta_{ij}||_2 }\right) 
\end{align}

From Equation (\ref{eqn:beta}), (\ref{eqn:firstoverall}), and (\ref{eqn:low}), we have,
\begin{equation}
P_e^M \le  \binom{M}{2} \cdot  2\int\limits_0^\tau p_D(l)Q\left( \frac{l}{\hat{\sigma} }\right) \text{d}l + Q\left( \frac{\tau}{\hat{\sigma} }\right).
\end{equation}
\end{proof}

\begin{myth}
\label{thm:main}
For any positional ranking aggregation algorithm with $M$ candidates and $N$ voters, taking input from the $(\epsilon, \delta)$-differentially private system defined in Section \ref{sec:alg}, the ranking error rate $P_e^M(N)$  satisfies
$$
P_e^M(N) \le \binom{M}{2}\frac{M!-1}{\sqrt{2}} \tau + Q\left(\frac{\epsilon N \tau}{\sqrt{2\ln(2/\delta)}}\right), \forall \tau>0.
$$
\end{myth}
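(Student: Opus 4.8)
The plan is to start directly from the bound established in Lemma 2,
$$P_e^M \le \binom{M}{2}\cdot 2\int_0^\tau p_D(l)\,Q\!\left(\frac{l}{\hat\sigma}\right)\mathrm{d}l + Q\!\left(\frac{\tau}{\hat\sigma}\right),$$
and to massage its two terms separately. Since $\hat\sigma = \sqrt{2\ln(2/\delta)}/(\epsilon N)$, the second term is already exactly $Q(\epsilon N\tau/\sqrt{2\ln(2/\delta)})$, so no work is needed there. Everything therefore reduces to controlling the integral term, and I claim the whole problem collapses to a single scalar estimate on the peak value $p_D(0)$ of the distance density.

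For the integral term, first I would use the two coarse facts at my disposal: $Q$ is decreasing on $[0,\infty)$ so that $Q(l/\hat\sigma)\le Q(0)=\tfrac12$ for every $l\ge 0$, and Lemma 1 gives $p_D(l)\le p_D(0)$ on $[0,\sqrt2]$. Combining these,
$$2\int_0^\tau p_D(l)\,Q\!\left(\frac{l}{\hat\sigma}\right)\mathrm{d}l \le 2\cdot\tfrac12\,p_D(0)\int_0^\tau \mathrm{d}l = p_D(0)\,\tau.$$
Hence the first term is at most $\binom{M}{2}p_D(0)\tau$, and the theorem follows once I show $p_D(0)\le (M!-1)/\sqrt2$.

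To bound the peak density I would return to the geometric description of $p_D$. Since $v$ is uniform on the rank simplex $\mathcal V$, the density of the signed distance to $\mathcal H$ is the normalized cross-sectional volume, $p_D(0)=\mathrm{Vol}_{M!-2}(\mathcal H\cap\mathcal V)/\mathrm{Vol}_{M!-1}(\mathcal V)$. The numerator is exactly the quantity computed in Lemma 1 through Webb's formula, $\mathrm{Vol}_{M!-2}(\mathcal H\cap\mathcal V)=\frac{\sqrt{M!}}{\Gamma(M!-1)}G(0)$, while the denominator is the volume $\sqrt{M!}/(M!-1)!$ of the standard $(M!-1)$-simplex, so their ratio is $(M!-1)G(0)$. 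It then remains to show $G(0)\le 1/\sqrt2$. Using the characteristic function from Lemma 1 and the pairing $\beta_j=-\beta_{j'}$ (so that $\sum_{j:\beta_j>0}\beta_j^2=\tfrac12\|\beta\|_2^2=\tfrac12$), I have $G(0)=\frac{1}{2\pi}\int_{-\infty}^{\infty}\prod_{j:\beta_j>0}(1+\beta_j^2t^2)^{-1}\mathrm{d}t$. Expanding the product and discarding its nonnegative higher-order terms gives $\prod_{j:\beta_j>0}(1+\beta_j^2t^2)\ge 1+\tfrac12 t^2$, whence $G(0)\le \frac{1}{2\pi}\int_{-\infty}^{\infty}\frac{\mathrm{d}t}{1+t^2/2}=\frac{1}{\sqrt2}$, and therefore $p_D(0)\le (M!-1)/\sqrt2$ as required.

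The routine steps above are the $Q\le\tfrac12$ truncation, the monotonicity supplied by Lemma 1, and the elementary integral. The step requiring genuine care is the identification $p_D(0)=(M!-1)G(0)$: it hinges on using Webb's slice formula together with the correct normalization of the simplex volume, and I would verify this constant carefully, since the entire $\binom{M}{2}(M!-1)/\sqrt2$ prefactor of the bound rides on that single ratio. The product inequality $\prod(1+\beta_j^2 t^2)\ge 1+\tfrac12 t^2$ is the one place where the pairing symmetry of rankings (reversed orders) is essential, as it is what pins the leading coefficient to $\tfrac12$ and hence the final $1/\sqrt2$.
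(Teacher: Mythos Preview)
Your proposal is correct and tracks the paper's argument almost step for step: start from Lemma~\ref{thm:halflem}, bound $Q(l/\hat\sigma)\le Q(0)=\tfrac12$, invoke Lemma~\ref{lem:decrease} to replace $p_D(l)$ by $p_D(0)$, and then show $p_D(0)\le (M!-1)/\sqrt2$. The only genuine divergence is in how this last inequality is obtained. The paper cites a geometric extremal fact from \cite{webb1996central}: among all central cross-sections of the regular $(M!-1)$-simplex, the largest is the one containing $M!-2$ vertices and the midpoint of the opposite edge, with $(M!-2)$-volume $\sqrt{M!}/\bigl(\sqrt2\,(M!-2)!\bigr)$; dividing by the simplex volume $\sqrt{M!}/(M!-1)!$ gives $(M!-1)/\sqrt2$ directly. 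You instead remain inside the analytic machinery already built in Lemma~\ref{lem:decrease}, writing $p_D(0)=(M!-1)G(0)$ and bounding $G(0)\le 1/\sqrt2$ via the elementary product inequality $\prod_{\beta_j>0}(1+\beta_j^2t^2)\ge 1+\tfrac12 t^2$ (the pairing $\beta_j=-\beta_{j'}$ forces $\sum_{\beta_j>0}\beta_j^2=\tfrac12$) together with the integral $\int_{-\infty}^{\infty}(1+t^2/2)^{-1}\,\mathrm{d}t=\sqrt2\,\pi$. Your route is more self-contained, since it does not import the separate maximal-section result as a black box; the paper's route is shorter but leans on that citation. Both land on exactly the same constant.
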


\begin{proof}
By Lemma \ref{thm:halflem},  we have,
\begin{align}\label{eqn:th1} \nonumber
P_e^M \le &  \binom{M}{2} \cdot  2\int\limits_0^\tau p_D(l)Q\left( \frac{l}{\hat{\sigma} }\right) \text{d}l + Q\left( \frac{\tau}{\hat{\sigma} }\right)\\\nonumber
\le & \binom{M}{2}  \cdot 2\int\limits_{0}^{\tau} p_D(l) Q(0) \text{d}l
 + Q\left( \frac{\tau}{\hat{\sigma} }\right)	\\
= & \binom{M}{2}  \cdot \int\limits_{0}^{\tau} p_D(l) \text{d}l +  Q\left( \frac{\tau}{\hat{\sigma} }\right)
\end{align}

By Lemma \ref{lem:decrease}, for any positional rules, $p_D(l) \le p_D(0)$. Hence we have,
\begin{align}\label{eqn:pe}
P_e^M \leq & \binom{M}{2}  \cdot \int\limits_{0}^{\tau} p_D(0) \text{d}l +  Q\left( \frac{\tau}{\hat{\sigma}}\right)	 \nonumber\\
= & \binom{M}{2}  \cdot   p_D(0) \tau +  Q\left( \frac{\tau}{\hat{\sigma}}\right)	
\end{align}

For positional rules, all hyperplanes $\mathcal{H}_{ij}$ pass through the $(M!-1)$-simplex centroid for any $i,j \in \{1,\dots,M\}$ since the profile at the centroid must be a tie for all candidates due to symmetry. From the literature in high dimensional geometry \cite{webb1996central}, we know that the largest cross section through the centroid of a regular $M!-1$-simplex is exactly the slice that contains $M!-2$ of its vertices and the midpoint of the remaining two vertices. The $(M!-2)$-measure of the cross section is $\sqrt{M!}/\left(\sqrt{2}(M!-2)!\right)$ for the probability simplex. Since the $(M!-1)$-measure of the probability simplex is $\sqrt{M!}/(M!-1)!$, we have,

\begin{equation}
\label{eqn:pd}
p_D(0) \le \frac{\sqrt{M!}/\left(\sqrt{2}(M!-2)!\right)}{\sqrt{M!}/(M!-1)!} = \frac{M!-1}{\sqrt{2}}
\end{equation}

From Equations (\ref{eqn:pe}) and (\ref{eqn:pd}), and the fact that ${\hat{\sigma}^2 = 2\ln(\frac{2}{\delta})/\epsilon ^2N^2}$, we have
\begin{align}
\label{eqn:final} \nonumber
P_e^M(N) \le&\binom{M}{2}\frac{M!-1}{\sqrt{2}} \tau + Q\left( \frac{\tau}{\hat{\sigma} }\right) \\
 = & \binom{M}{2}\frac{M!-1}{\sqrt{2}} \tau + Q\left(\frac{\epsilon N \tau}{\sqrt{2\ln(2/\delta)}}\right)
\end{align}
\end{proof}

By taking the derivative with respect to $\tau$, we can show that the right side of Equation (\ref{eqn:final}) is minimized when 
\begin{equation}
\tau = \frac{\sqrt{2\ln(2/\delta)}}{\epsilon N}\sqrt{-2\ln\frac{\sqrt{\pi\ln(2/\delta)M(M-1)(M!-1)}}{\sqrt{2}\epsilon N}}.
\end{equation}

\emph{Remark}: To better understand this upper bound, we can use a Q-function approximation to represent the result of Theorem \ref{thm:main}. It is known that 
\begin{equation}
Q(x) \le \frac{e^{-\frac{x^2}{2}}}{\sqrt{2\pi}x}, \forall x > 0.
\end{equation}
This is a good approximation when $x$ is large \cite{karagiannidis2007improved}. Thus we can rewrite Equation (\ref{eqn:final}) as
\begin{equation}
P_e^M(N) \le \binom{M}{2}\frac{M!-1}{\sqrt{2}} \tau + \frac{\sqrt{\ln(2/\hat{\sigma})}}{2\sqrt{\pi}\epsilon N\tau}e^{-\frac{(\epsilon N\tau)^2}{4\ln(2/\hat{\sigma})}}, \forall \tau>0.
\end{equation}

We can further simplify the expression by letting \\
$\tau = 2\sqrt{\ln N \ln (2/\delta)}/(\epsilon N)$:
\begin{equation}
P_e^M(N) \le \frac{1}{N} \left( \frac{\binom{M}{2} (M!-1) \sqrt{2 \ln N \ln (2/\delta)}}{\epsilon}  + \frac{1}{2\sqrt{\pi \ln N}} \right).
\label{eqn:simplebound}
\end{equation}
It is shown in (\ref{eqn:simplebound}) that the error rate goes to 0 at least as fast as $O(\frac{\sqrt{\ln N}}{N})$ for fixed $\delta,\epsilon$.

\begin{figure}[!t]
  \centering
  \includegraphics[width=0.4\textwidth]{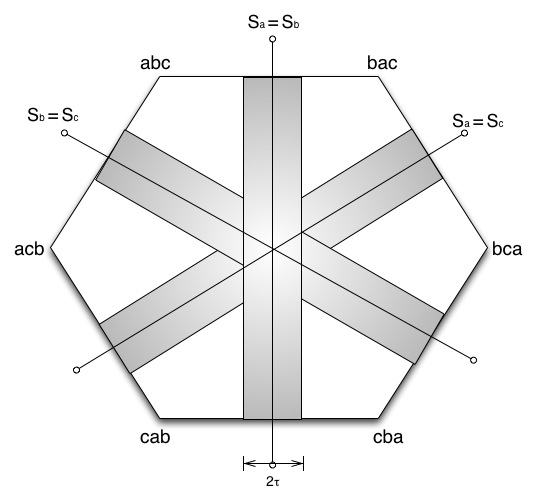}
  \caption{An example of Petrie polygon (skew orthogonal projections) of three candidates. Three hyperplanes, under Borda count ranking rule, separate the simplex into six polytopes.}
  \label{fig:example}
 \end{figure}

 \subsection{Asymptotic Error Rate}
 In this section, we analyze the asymptotic error rate for any positional ranking rule. We start by showing a tighter bound on the general error rate that can be derived from the proof of Theorem \ref{thm:main}. 
\begin{mylem}
\label{lem:tight}
An upper bound for the  ranking error rate of any $(\epsilon, \delta)$-differentially private positional ranking system with $M$ candidates and $N$ voters is
\begin{equation}\nonumber
 \binom{M}{2}\sqrt{2}(M!-1) Q\left(\frac{\epsilon N \tau}{2\sqrt{2\ln(2/\delta)}}\right) \tau  
+  Q\left(\frac{\epsilon N \tau}{\sqrt{2\ln(2/\delta)}}\right)
\end{equation}
for $\forall \tau > 0$.
\end{mylem}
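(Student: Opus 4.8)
The plan is to start from the bound already established in Lemma~\ref{thm:halflem},
$$
P_e^M \le \binom{M}{2}\cdot 2\int_0^\tau p_D(l)\,Q\!\left(\tfrac{l}{\hat{\sigma}}\right)\mathrm{d}l + Q\!\left(\tfrac{\tau}{\hat{\sigma}}\right),
$$
which holds for every $\tau>0$, and to tighten only the first (high-error-region) term while leaving the second term $Q(\tau/\hat{\sigma})$ untouched, since the latter already appears verbatim in the target statement. The entire improvement over Theorem~\ref{thm:main} must come from treating the switch probability $Q(l/\hat{\sigma})$ inside the slab $[0,\tau]$ more accurately than the crude estimate $Q(l/\hat{\sigma})\le Q(0)=\tfrac12$ that was used there.

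First I would control the density factor. By Lemma~\ref{lem:decrease} we have $p_D(l)\le p_D(0)$ on $[0,\tau]$, and by the cross-section computation in (\ref{eqn:pd}), $p_D(0)\le (M!-1)/\sqrt{2}$. Pulling this constant out of the integral turns the first term into
$$
\binom{M}{2}\sqrt{2}\,(M!-1)\int_0^\tau Q\!\left(\tfrac{l}{\hat{\sigma}}\right)\mathrm{d}l,
$$
so that it remains to bound $\int_0^\tau Q(l/\hat{\sigma})\,\mathrm{d}l$ by $\tau\,Q\!\big(\tfrac{\tau}{2\hat{\sigma}}\big)$, i.e.\ to replace the slab-averaged switch probability by its value at the midpoint distance $\tau/2$. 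Substituting $\hat{\sigma}=\sqrt{2\ln(2/\delta)}/(\epsilon N)$, which gives $\tau/\hat{\sigma}=\epsilon N\tau/\sqrt{2\ln(2/\delta)}$ and $\tau/(2\hat{\sigma})$ equal to half of that, then rewrites the two $Q$-terms exactly as in the statement and finishes the argument. Note that the resulting first term is precisely the Theorem~\ref{thm:main} term multiplied by $2Q(\tau/(2\hat{\sigma}))\le 1$, which is what makes this bound tighter.

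The hard part will be exactly this midpoint replacement. Evaluating the switch probability at the centre of the slab is the natural way to recover the factor-of-two slack lost by using $Q(0)=\tfrac12$ everywhere, and it is what produces the gain above. The obstruction is that $Q$ is \emph{convex} on $[0,\infty)$ (indeed $Q''(x)=\tfrac{x}{\sqrt{2\pi}}e^{-x^2/2}\ge 0$), so by Hermite--Hadamard the average of $Q(l/\hat{\sigma})$ over $[0,\tau]$ is at least its midpoint value, meaning $Q(\tau/(2\hat{\sigma}))$ is a lower, not an upper, bound on that average. A literal pointwise substitution therefore cannot be justified, and this step must be discharged with care: I would split the slab at $l=\tau/2$, use $Q(l/\hat{\sigma})\le Q(\tau/(2\hat{\sigma}))$ directly on the outer half where $l\ge\tau/2$, and treat the inner half by a separate estimate (or restrict to the small-$\tau$ regime that the subsequent asymptotic analysis actually uses). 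Pinning down this convexity point — rather than the routine density bound and the final substitution — is where the real work of the lemma lies.
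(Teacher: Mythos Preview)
Your plan coincides with the paper's own proof: start from Lemma~\ref{thm:halflem}, replace $p_D(l)$ by $p_D(0)\le (M!-1)/\sqrt{2}$ via Lemma~\ref{lem:decrease} and (\ref{eqn:pd}), and then reduce $\int_0^\tau Q(l/\hat\sigma)\,\mathrm{d}l$ to $\tau\,Q(\tau/(2\hat\sigma))$. The paper carries out exactly this chain and justifies the last step by writing ``Since the $Q$-function is convex on $[0,+\infty)$, by Jensen's Inequality\ldots''.

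The concern you raise is not merely a detail to be ``discharged with care'': it is a genuine error in the paper's argument, and your proposed workaround cannot repair it. For $Q$ convex on $[0,\infty)$, Jensen (equivalently, the left Hermite--Hadamard inequality) gives
\[
\frac{1}{\tau}\int_0^\tau Q\!\left(\frac{l}{\hat\sigma}\right)\mathrm{d}l \;\ge\; Q\!\left(\frac{\tau}{2\hat\sigma}\right),
\]
which is the reverse of what is needed; the inequality is strict for every $\tau>0$, so $\int_0^\tau Q(l/\hat\sigma)\,\mathrm{d}l \le \tau\,Q(\tau/(2\hat\sigma))$ is simply false. Your splitting idea does not help either: on the inner half $[0,\tau/2]$ one has $Q(l/\hat\sigma)\ge Q(\tau/(2\hat\sigma))$ pointwise, so that sub-integral alone already exceeds $(\tau/2)\,Q(\tau/(2\hat\sigma))$, and adding any nonnegative contribution from the outer half makes the total strictly larger than $\tau\,Q(\tau/(2\hat\sigma))$. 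Thus neither the paper's Jensen step nor your patch can produce the displayed bound from Lemma~\ref{thm:halflem} along this route; you have correctly located the gap, and it is shared by the paper's own proof.
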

\begin{proof}
Since the Q-function is convex on $[0,+ \infty)$, by Jensen's Inequality, from $\textrm{Lemma \ref{lem:decrease}}$ and  $\textrm{Lemma \ref{thm:halflem}}$,  we have
\begin{align}
\label{eqn:overall}
P_e^M(N) &\le  \binom{M}{2} \cdot  2\int\limits_0^\tau p_D(l)Q\left( \frac{l}{\hat{\sigma} }\right) \text{d}l + Q\left( \frac{\tau}{\hat{\sigma} }\right) 
\notag\\
&\le  \binom{M}{2} \cdot  2\int\limits_0^\tau p_D(0)Q\left( \frac{l}{\hat{\sigma} }\right) \text{d}l + Q\left( \frac{\tau}{\hat{\sigma} }\right)  \notag\\
&\le \binom{M}{2} \cdot  2p_D(0)Q\left( \frac{\tau}{2\hat{\sigma} }\right)  + Q\left( \frac{\tau}{\hat{\sigma}  }\right) \notag\\
&=\binom{M}{2}\sqrt{2}(M!-1) Q\left(\frac{\epsilon N \tau}{2\sqrt{2\ln(2/\delta)}}\right) \tau  \notag\\
&+  Q\left(\frac{\epsilon N \tau}{\sqrt{2\ln(2/\delta)}}\right).
\end{align}
\end{proof}

Lemma \ref{lem:tight}  slightly improves the bound in Theorem \ref{thm:main}. We use this lemma to assist the proof of the following Theorem.

\begin{myth}
\label{thm:asyn}
For any positional ranking aggregation algorithm with $M$ candidates, taking input from the $(\epsilon, \delta)$-differentially private system defined in Section \ref{sec:alg}, 
$$\lim_{N\rightarrow \infty} P_e^M(N) = 0$$ 
for any given $\epsilon$ and $\delta$. 
\end{myth}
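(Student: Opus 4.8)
The plan is to leverage the fact that the error bounds in Theorem \ref{thm:main} and Lemma \ref{lem:tight} hold uniformly for \emph{every} $\tau > 0$, which lets me treat the slice thickness $\tau$ not as a constant but as a sequence $\tau(N)$ that I am free to tune with $N$. The structural observation driving the whole argument is that the bound is a sum of two competing pieces: a ``high-error'' contribution that is \emph{linear} in $\tau$ (hence small when $\tau$ is small), and a ``low-error'' $Q$-term whose argument is proportional to $N\tau$ (hence small when $N\tau$ is large). To send both to zero at once it therefore suffices to choose any $\tau(N)$ satisfying the decoupling conditions $\tau(N)\to 0$ and $N\tau(N)\to\infty$; the simplest admissible choice is $\tau(N)=N^{-1/2}$.

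First I would fix $M$, $\epsilon$, and $\delta$, so that $\binom{M}{2}$, $(M!-1)$, and $\sqrt{2\ln(2/\delta)}$ are all constants. Then I would invoke Lemma \ref{lem:tight}, which gives
$$P_e^M(N) \le \binom{M}{2}\sqrt{2}(M!-1)\, Q\!\left(\frac{\epsilon N\tau}{2\sqrt{2\ln(2/\delta)}}\right)\tau + Q\!\left(\frac{\epsilon N\tau}{\sqrt{2\ln(2/\delta)}}\right).$$
Substituting $\tau=N^{-1/2}$, the arguments of both $Q$-functions are proportional to $\epsilon\sqrt{N}/\sqrt{2\ln(2/\delta)}$, which diverges to $+\infty$ as $N\to\infty$. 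Since $Q(x)\to 0$ as $x\to\infty$, the second term vanishes outright. For the first term, $Q(\cdot)\to 0$ while the extra factor $\tau=N^{-1/2}\to 0$, so their product (times the fixed combinatorial constant) also vanishes. Because the right-hand side tends to $0$ and $P_e^M(N)\ge 0$, a squeeze then yields $\lim_{N\to\infty}P_e^M(N)=0$.

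I do not expect a genuine obstacle here: the substantive analytic work—the two-region decomposition in Lemma \ref{thm:halflem}, the monotonicity of the cross-section density in Lemma \ref{lem:decrease}, and the Webb-type bound on $p_D(0)$ used in Theorem \ref{thm:main}—is already in place, so all that remains is an elementary limit. The only genuinely ``new'' idea is the decoupling trick $\tau\to 0$ with $N\tau\to\infty$, and any polynomial choice $\tau=N^{-\alpha}$ with $0<\alpha<1$ works equally well (indeed Theorem \ref{thm:main} alone already suffices). If one wants the explicit decay rate rather than merely the limit, I would instead plug in the optimized thickness $\tau=2\sqrt{\ln N\,\ln(2/\delta)}/(\epsilon N)$ from the remark following Theorem \ref{thm:main}, which recovers the sharper $O(\sqrt{\ln N}/N)$ bound recorded in Equation (\ref{eqn:simplebound}) and in particular forces the limit to be zero.
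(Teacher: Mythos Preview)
Your proposal is correct and follows essentially the same route as the paper: both derive the claim directly from the bound in Lemma~\ref{lem:tight}. The paper's one-line proof appeals (somewhat cryptically) to the Bounded Convergence Theorem, whereas your explicit choice $\tau(N)=N^{-1/2}$ satisfying $\tau\to 0$ and $N\tau\to\infty$ is a more elementary and transparent way to pass to the limit; in fact, as you observe, with Lemma~\ref{lem:tight} even a \emph{fixed} $\tau>0$ already suffices, since the first term carries the factor $Q(\epsilon N\tau/2\sqrt{2\ln(2/\delta)})\to 0$.
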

\begin{proof}
This directly follows from $\textrm{Lemma \ref{lem:tight}}$ and the Bounded Convergence Theorem.
\end{proof}


\section{Simulation Results}
\label{sec:example}
\noindent In this section, we use Borda count with three candidates as an example. Once the ranking rule is known, we can derive a tighter bound than the general error rate bound in $\textrm{Section \ref{sec:main}}$, because we know exactly what the pairwise comparison boundaries are. We will compare all upper bounds with the simulation error rates.

In Borda count, for every vote the candidate ranked first receives 1 point, the second receives 0.5 points, and the bottom candidate receives no points. The aggregated rank is sorted according to the total points each candidate receives. We list $3!=6$ permutations in the following order, and we will stick to this order for the rest of this paper: $abc, acb, cab, cba, bca, bac$. Let 
\begin{equation} 
M = \left( \begin{array}{cccccc}
1 & 1 & 0.5 & 0 & 0 & 0.5  \\
0.5 & 0 & 0 & 0.5 & 1 & 1  \\
0 & 0.5 & 1 & 1 & 0.5 & 0 
\end{array} \right).
\end{equation} 

Then we have 
\begin{equation} 
\left(\begin{array}{c}
S_a  \\
S_b  \\
S_c 
\end{array}\right) = Mv,
\end{equation}
where $v$ is defined in Section \ref{sec:linear} and $S_a, S_b, S_c$ are the aggregated score of candidates $a,b$ and $c$ respectively. The hyperplane $\mathcal{H}_{ab}$ satisfies $S_a = S_b$,
\begin{equation}
2v_1+2v_2+v_3+v_6 = v_1+v_4+2v_5+2v_6
\end{equation}
i.e.
\begin{equation}
\label{eqn:h1}
\mathcal{H}_{ab}: v_1+2v_2+v_3-v_4-2v_5-v_6=0
\end{equation}
Similarly, we have
\begin{equation}
\label{eqn:h2}
\mathcal{H}_{bc}: v_1-v_2-2v_3-v_4+v_5+2v_6=0
\end{equation}
\begin{equation}
\label{eqn:h3}
\mathcal{H}_{ac}: 2v_1+v_2-v_3-2v_4-v_5+v_6=0
\end{equation}

With Equations (\ref{eqn:h1}), (\ref{eqn:h2}) and (\ref{eqn:h3}), we can compute the volume of the cross section made by the hyperplane cutting through the probability simplex (\ref{eqn:simplex}), using methods proposed in \cite{lawrence1991polytope}. Then an upper bound specifically for Borda count can be derived with a similar approach as Theorem \ref{thm:main} or $\textrm{Lemma \ref{lem:tight}}$. 

Figure \ref{fig:upperbound} shows the simulation results of Borda count with 3 candidates and 2,000 voters, repeated 100,000 times. We set $\delta = 5 \times 10^{-4}$ (which is 0.1 divided by the number of voters), and plot the graph of error rate with $\epsilon$ taking values between 0.05 and 0.24. We compare the simulation results with the general upper bound derived in Theorem \ref{thm:main} and the improved upper bound in Lemma \ref{lem:tight}, as well as the ranking rule-specific upper bound described above. 

\begin{figure}[!t]
  \centering
  \includegraphics[width=0.48\textwidth]{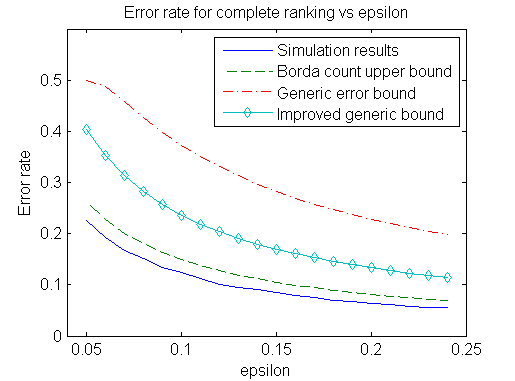}
  \caption{Error rate for vs $\epsilon$.}
  \label{fig:upperbound}
 \end{figure}

 
 Figure \ref{fig:voters} shows the simulation results for Borda count with 3 candidates with fixed $\epsilon$, repeated 20,000 times. We set $\epsilon = 0.1$ and $\delta = 0.1/N$, where $N$ is the number of voters. The number of voters varies from 1,000 to 100,000. 
 The error vanishes fast with a growing number of voters, even if we set $\delta$ to be inversely proportional to the number of voters.
 We also compare the simulation results with the general upper bound derived in Theorem \ref{thm:main} and the improved upper bound in Lemma \ref{lem:tight}, as well as the ranking rule-specific upper bound described above. 
 
\begin{figure}[!t]
  \centering
  \includegraphics[width=0.48\textwidth]{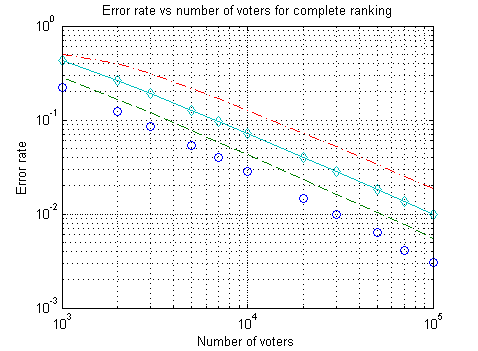}
  \caption{Error rate vs number of voters.}
  \label{fig:voters}
 \end{figure}

\section{Conclusions}
\noindent In this paper, we apply the framework of differential privacy to rank aggregation by adding noise in the votes. We analyze the probability that the aggregated ranking becomes inaccurate due to the noise and derive upper bounds on the error rates of ranking for all positional ranking rules under the assumption that profiles are uniformly distributed. The bounds can be tightened using techniques in high dimensional polytope volume computation if we are given a specific ranking rule. Our results provide insights into the trade-offs between privacy and accuracy in rank aggregation.

\section{Acknowledgments}
This research was supported in part by the Center for Science of
Information (CSoI), a National Science Foundation (NSF) Science and Technology Center, under grant
agreement CCF-0939370, by NSF under the grant CCF-1116013, by Air Force Office of Scientific Research, under the grant 
FA9550-12-1-0196, and by a research grant from Deutsche Telekom
AG.

\bibliographystyle{IEEEbib}
\bibliography{myref}

\begin{thebibliography}{10}

\bibitem{dwork2006differential}
Cynthia Dwork,
\newblock ``Differential privacy,''
\newblock in {\em Automata, languages and programming}, pp. 1--12. Springer,
  2006.

\bibitem{dwork2006our}
Cynthia Dwork, Krishnaram Kenthapadi, Frank McSherry, Ilya Mironov, and Moni
  Naor,
\newblock ``Our data, ourselves: Privacy via distributed noise generation,''
\newblock in {\em Advances in Cryptology-EUROCRYPT 2006}, pp. 486--503.
  Springer, 2006.

\bibitem{task2012guide}
Christine Task and Chris Clifton,
\newblock ``A guide to differential privacy theory in social network
  analysis,''
\newblock in {\em Proceedings of the 2012 International Conference on Advances
  in Social Networks Analysis and Mining (ASONAM 2012)}. IEEE Computer Society,
  2012, pp. 411--417.

\bibitem{mcsherry2009differentially}
Frank McSherry and Ilya Mironov,
\newblock ``Differentially private recommender systems: building privacy into
  the net,''
\newblock in {\em Proceedings of the 15th ACM SIGKDD international conference
  on Knowledge discovery and data mining}. ACM, 2009, pp. 627--636.

\bibitem{lindell2011practical}
Yehuda Lindell and Eran Omri,
\newblock ``A practical application of differential privacy to personalized
  online advertising.,''
\newblock {\em IACR Cryptology ePrint Archive}, vol. 2011, pp. 152, 2011.

\bibitem{machanavajjhala2011personalized}
Ashwin Machanavajjhala, Aleksandra Korolova, and Atish~Das Sarma,
\newblock ``Personalized social recommendations: accurate or private,''
\newblock {\em Proceedings of the VLDB Endowment}, vol. 4, no. 7, pp. 440--450,
  2011.

\bibitem{kalai2002fourier}
Gil Kalai,
\newblock ``A fourier-theoretic perspective on the condorcet paradox and
  arrow's theorem,''
\newblock {\em Advances in Applied Mathematics}, vol. 29, no. 3, pp. 412--426,
  2002.

\bibitem{conitzer2006improved}
Vincent Conitzer, Andrew Davenport, and Jayant Kalagnanam,
\newblock ``Improved bounds for computing kemeny rankings,''
\newblock in {\em AAAI}, 2006, vol.~6, pp. 620--626.

\bibitem{dwork2001rank}
Cynthia Dwork, Ravi Kumar, Moni Naor, and Dandapani Sivakumar,
\newblock ``Rank aggregation methods for the web,''
\newblock in {\em Proceedings of the 10th international conference on World
  Wide Web}. ACM, 2001, pp. 613--622.

\bibitem{dwork2006calibrating}
Cynthia Dwork, Frank McSherry, Kobbi Nissim, and Adam Smith,
\newblock ``Calibrating noise to sensitivity in private data analysis,''
\newblock in {\em Theory of Cryptography}, pp. 265--284. Springer, 2006.

\bibitem{barak2007privacy}
Boaz Barak, Kamalika Chaudhuri, Cynthia Dwork, Satyen Kale, Frank McSherry, and
  Kunal Talwar,
\newblock ``Privacy, accuracy, and consistency too: a holistic solution to
  contingency table release,''
\newblock in {\em Proceedings of the twenty-sixth ACM SIGMOD-SIGACT-SIGART
  symposium on Principles of database systems}. ACM, 2007, pp. 273--282.

\bibitem{webb1996central}
Simon Webb,
\newblock ``Central slices of the regular simplex,''
\newblock {\em Geometriae Dedicata}, vol. 61, no. 1, pp. 19--28, 1996.

\bibitem{bochner1959lectures}
Salomon Bochner,
\newblock {\em Lectures on Fourier integrals}, vol.~42,
\newblock Princeton University Press, 1959.

\bibitem{karagiannidis2007improved}
George~K Karagiannidis and Athanasios~S Lioumpas,
\newblock ``An improved approximation for the gaussian q-function,''
\newblock {\em Communications Letters, IEEE}, vol. 11, no. 8, pp. 644--646,
  2007.

\bibitem{lawrence1991polytope}
Jim Lawrence,
\newblock ``Polytope volume computation,''
\newblock {\em Mathematics of Computation}, vol. 57, no. 195, pp. 259--271,
  1991.

\end{thebibliography}

%
%
\end{document}